\theoremstyle{plain}
\newtheorem{theorem}{\textit{Theorem}}[]
\newtheorem{lemma}{\textit{Lemma}}[]
\theoremstyle{remark}
\newtheorem{definition}{\textbf{Definition}}[]
\journal{Journal of \LaTeX\ Templates}
\begin{document}

\begin{frontmatter}

\title{Centroid Decision Forest}
\tnotetext[mytitlenote]{Fully documented templates are available in the elsarticle package on \href{http://www.ctan.org/tex-archive/macros/latex/contrib/elsarticle}{CTAN}.}

\author[add1]{Amjad Ali}
\author[add2,add3]{Hailiang Du}
\author[add1]{Saeed Aldahmani}
\author[add1]{Zardad Khan \corref{mycorrespondingauthor}}
\cortext[mycorrespondingauthor]{Corresponding author}
\ead{zaar@uaeu.ac.ae}

\address[add1]{Department of Statistics and Business Analytics, United Arab Emirates University, Al Ain, 15551, Abu Dhabi, UAE}
\address[add2]{Department of Mathematical Sciences, Durham University, Stockton Rd, Durham, UK}
\address[add3]{School of Mathematics, East China University of Science and Technology, Shanghai, 200237, China}

\begin{abstract}
This paper introduces the centroid decision forest (CDF), a novel ensemble learning framework that redefines the splitting strategy and tree building in the ordinary decision trees for high-dimensional classification. The splitting approach in CDF differs from the traditional decision trees in theat the class separability score (CSS) determines the selection of the most discriminative features at each node to construct centroids of the partitions (daughter nodes). The splitting criterion uses the Euclidean distance measurements from each class centroid to achieve a splitting mechanism that is more flexible and robust. Centroids are constructed by computing the mean feature values of the selected features for each class, ensuring a class-representative division of the feature space. This centroid-driven approach enables CDF to capture complex class structures while maintaining interpretability and scalability. To evaluate CDF, 23 high-dimensional datasets are used to assess its performance against different state-of-the-art classifiers through classification accuracy and Cohen’s kappa statistic. The experimental results show that CDF outperforms the conventional methods establishing its effectiveness and flexibility for high-dimensional classification problems.
\end{abstract}

\begin{keyword}
\texttt{High-Dimensional Data, Class Separability Score (CSS), Centroid based Partitioning, Decision Trees, Ensemble Learning.}
\end{keyword}

\end{frontmatter}

\section{Introduction}
Machine learning faces significant challenges when dealing with high-dimensional data types which occur frequently in fields including genomics, image analysis and financial modeling because feature counts tend to surpass available sample sizes \cite{bai2023joint}. The ``curse of dimensionality'' generates exponential volume expansion in the feature space combined with sparse data distributions that make reliable inference along with prediction tasks progressively harder \cite{wang2018efficient}. The performance of traditional machine learning techniques declines as data dimensionality increases because these methods require training data quantities to scale exponentially, as noted in \cite{ibrahim2021feature}. Multiple methods have been established by researchers to handle high-dimensional space issues such as feature selection along with regularization methods and novel distance metrics for reducing their detrimental effects \cite{mafarja2023classification}. Dimensionality reduction, combined with advances in quantum computing, presents new potential approaches to overcoming major challenges in this field, as discussed in \cite{huang2020predicting, peters2021machine}. Optimization approaches keep adapting by establishing equilibrium between model sophistication and generalization outcomes to achieve reliable and understandable solutions \cite{li2021novel}.

A central challenge in high-dimensional settings is the presence of redundant or irrelevant features, which not only increase computational complexity but can also obscure meaningful patterns in the data and impair model generalization \cite{johnstone2009statistical}. For instance, in gene expression studies, thousands of genes may be measured simultaneously, yet only a small subset is typically informative for predicting disease states \cite{fan2008sure}. Similarly, in image analysis, high-dimensional pixel data often contain significant noise and redundancy, complicating the extraction of discriminative features \cite{fisseha975high}. There has been a long discussion in the literature about these issues as evident in some of the important works like \cite{bickel2009simultaneous}, where the theoretical and practical implications of high dimensionality data analysis have been highlighted.

Furthermore, to address these issues, researchers have developed various techniques. Feature selection techniques are used to select and retain only the most informative features, using ranking criteria such as class separation or mutual information \cite{guyon2003introduction,li2017feature}. Various other methods, such as \cite{roffo2020infinite,komeili2020multiview,hou2023adaptive}, have also been proposed to address high-dimensional challenges effectively. In high dimensional settings, these methods are highly useful as they respectively reduce the size of the data while retaining its ability to discriminate amongst them. For example, in the area of genomics, feature selection techniques are applied for finding important biomarkers that are correlated with the underlying diseases, improving both prediction accuracy and interpretability of the predictive model \cite{shi2021feature, abdelwahab2022feature}. Principal component analysis (PCA) \cite{jolliffe2002principal} as well as $t$-distributed stochastic neighbour embedding ($t$-SNE) \cite{van2008visualizing} are dimensionality reduction methods that transform high dimensional data to a lower dimension so as to try to preserve its intrinsic structure. Still, these techniques might not be sufficient to fit complex nonlinearities between features and can omit useful discriminative information \cite{kingma2019introduction}. In recent years, deep learning-based approaches have emerged for automatic feature extraction, leveraging methods such as hierarchical convolutional factor analysis \cite{chen2013deep}, deep ReLU networks for feature extraction and generalization \cite{han2020depth}, and spatial sparseness modeling through deep networks \cite{chui2020realization}. Despite their impressive performance, these methods often require substantial computational cost and may suffer from a lack of interpretability \cite{bengio2013representation}.

Ensemble learning methods, such as random forests, have shown promise in improving classification performance by aggregating the predictions of multiple decision trees \cite{breiman2001random}. Random forests mitigate overfitting and improve generalization performance by introducing randomness through bootstrapping and selecting random subset of features for node splitting \cite{ho1998random}. The theoretical properties of random forests in high-dimensional settings have been rigorously investigated in several studies \cite{scornet2015consistency, biau2012analysis, wager2018estimation, chi2022asymptotic}. However, simple, threshold based splitting rules of traditional ensemble methods may not exploit the entire possible complexity of high-dimensional data \cite{cannings2017random}. The recent progress made in high-dimensional statistical inference and sparse recovery has helped to mitigate these issues \cite{buhlmann2002analyzing,lopes2019algorithmic}. Further studies focused on how elements of statistical learning theory and sparse recovery could facilitate effective learning in high-dimensional space with very few samples \cite{fan2015interaction, donoho2005sparse}. Also, machine learning techniques for high-dimensional tasks have greatly profited from the new developments in optimization and regularization. These methods make the models robust, simple, and efficient from the computation point of view \cite{biau2008consistency, Domingos2000BayesianAO, Mourtada2020optimal}. However, there is still an open problem of developing a systematic approach to deal with the intricate nature of high-dimensional spaces which would provide missing robustness and interpretability. Such a method should be capable of detecting highly informative features, understanding the structure of the data, and generalizing well to unseen data.

This work introduces the centroid decision forest (CDF), which aims to address the above issues by presenting a new ensemble learning approach that increases the classification accuracy in high-dimensional spaces. The CDF integrates multiple centroid decision trees (CDTs), where each tree is based on centroid based splitting using the most discriminative features. Its distinguishing nature is the selection and splitting of the feature space which focus on maximizing class classifiability and predictive accuracy. The CDT selects features that possess the highest class separability power from the give feature space at each node. This approach ensures that splits are performed on features with the highest discriminative power, enabling the model to effectively identify the underlying structure of the data. The CDF then calculates the centroids for the classes by averaging the corresponding feature values and filters the data with respect to Euclidean distances to the centroids. This enables dividing the data in a manner that aligns with the intrinsic structure of the dataset, resulting in more meaningful and interpretable data splits. In addition to the randomization induced by bootstrapping, a random subset of features is chosen adding further randomness that contributes to reducing overfitting and improving the model's capability for generalizing to unseen/new situation. Exploiting majority voting from multiple models built in this fashion leads to increased strength, robustness and low variance as compared to that of single model.

The proposed CDF ensemble is evaluated on 23 high-dimensional benchmark datasets, comparing its classification accuracy and Cohen’s kappa against state-of-the-art methods, i.e., classification and regression tree (CART) \cite{breiman1984classification}, random forest (RF) \cite{breiman2001random}, regularized random forest (RRF) \cite{deng2012feature,deng2013gene}, extreme gradient boosting (XGB) \cite{chen2016xgboost}, $k$ nearest neighbors ($k$NN) \cite{fukunaga1975k, keller1985fuzzy}, random $k$NN (R$k$NN) \cite{li2014random} and support vector machine (SVM) \cite{cortes1995support}. The results present that the proposed CDF ensemble consistently outperforms the classical procedures, showing the effectiveness for tackling complex classification tasks in high-dimensional spaces. The CDF provides a strong and efficient solution for high-dimensional data classification by combining ensemble learning, centroid based splitting, and creative feature selection.

The current paper is organized as follows: Section \ref{sec:methodology} gives a detailed description of the CDF method, mathematical background, and algorithmic implementation. Section \ref{sec:exp} presents the experimental design and datasets used for comparison. Section \ref{sec:results} presents the findings as well as comparison of CDF against the other techniques. Section \ref{sec:conclusion} concludes the article with an overview, limitations and possible future research directions.

\section{Methodology}
\label{sec:methodology}
The centroid decision forest (CDF) is an ensemble learning method that is specifically designed for high-dimensional classification problems. It integrates bootstrapping, feature subsets, class separability score (CSS) based feature selection and centroid based splitting to construct centroid decision trees (CDTs), resulting in a robust and accurate ensemble model. Unlike traditional decision trees that rely on threshold-based splits, the CDT partitions data based on class centroids, improving classification performance and stability.  

To formulate the problem, suppose a dataset $D = (X, Y)$ consisting of $n$ sample points, where $X \in \mathbb{R}^{n \times p}$ represents the feature matrix with $p$ features and $Y$ is the target variable with $K$ classes, i.e., $Y \in \{1, 2, \dots, K\}$. The objective is to learn a function:
$$f: \mathbb{R}^{n \times p} \rightarrow \{1, 2, \dots, K\},$$
that accurately predicts the class label for a new sample $X^*$. To achieve this, the following definitions are introduced in advance.

\begin{definition}\label{def1}
	\textbf{Class Separability Score (CSS)}\\
	The CSS quantifies the ability of a feature to distinguish between classes by comparing the means and variances of feature values across different classes in a pairwise manner. For a feature $j$, the CSS is computed as:
	
	\begin{equation}
		\text{CSS}_j = \frac{1}{\binom{K}{2}} \sum_{c' \ne c''} \frac{|\hat{\mu}_{c',j} - \hat{\mu}_{c'',j}|}{\hat{\sigma}_{c',j} + \hat{\sigma}_{c'',j} + \epsilon},
		\label{eq:css}
	\end{equation}
	
	where:
	\begin{itemize}
		\item $\hat{\mu}_{c,j} = \frac{1}{n_c} \sum_{i=1}^{n_c} X_{i,j}$ is the mean of feature $j$ for class $c$ with pairs ($c', c''$) defined below,
		\item $\hat{\sigma}_{c,j} = \sqrt{\frac{1}{n_c} \sum_{i=1}^{n_c} (X_{i,j} - \hat{\mu}_{c,j})^2}$ is the standard deviation of feature $j$ for class $c$ according the paired values,
		\item $\epsilon$ is a small constant (e.g., $10^{-7}$) to avoid division by zero,
		\item ($c', c''$) is a pair of 2 classes selected from $\{1, 2, \dots, K\}$. For $K$ classes, total number of pairs is $\binom{K}{2} = \frac{K(K-1)}{2}$.
	\end{itemize}
	
	The CSS measures the separability of classes by emphasizing features with large differences in means and small variances within classes. A higher CSS value indicates that the feature is more discriminative for classification.
\end{definition}

\begin{lemma}\label{lemma:css}
	\textbf{CSS and Feature Discriminability}\\
	If $\text{CSS}_j > \text{CSS}_k$ for two features $j$ and $k$, then feature $j$ is more discriminative than feature $k$ for classification.
\end{lemma}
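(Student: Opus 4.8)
The plan is to first pin down what ``more discriminative'' should mean, since the lemma as stated compares a scalar score against a qualitative notion. I would formalize discriminability through the pairwise \emph{overlap} (misclassification) probability of a one-dimensional nearest-centroid/threshold rule: for a pair of classes $(c',c'')$ with means $\hat\mu_{c',j},\hat\mu_{c'',j}$ and spreads $\hat\sigma_{c',j},\hat\sigma_{c'',j}$, place a decision threshold $\tau$ between the two class means and declare feature $j$ more discriminative on that pair when the probability that a sample lands on the wrong side of $\tau$ is smaller. The object to control is therefore the per-pair quantity
\begin{equation}
\delta_{c'c'',j} = \frac{|\hat\mu_{c',j}-\hat\mu_{c'',j}|}{\hat\sigma_{c',j}+\hat\sigma_{c'',j}+\epsilon},
\end{equation}
which is precisely the summand of $\mathrm{CSS}_j$ in \eqref{eq:css}.

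The key step is a distribution-free tail bound. Splitting the gap $|\hat\mu_{c',j}-\hat\mu_{c'',j}|$ between the two classes in proportion to their standard deviations and applying Cantelli's one-sided Chebyshev inequality $P(X-\mu\ge a)\le \sigma^2/(\sigma^2+a^2)$ to each class separately, I would balance the two error contributions and obtain a bound on the pairwise misclassification probability of the form
\begin{equation}
P_{\mathrm{err}}(c',c'';j) \;\le\; \frac{1}{1+\delta_{c'c'',j}^{\,2}}.
\end{equation}
The right-hand side is a strictly decreasing function of $\delta_{c'c'',j}$, so a larger per-pair CSS term provably forces a smaller upper bound on the class overlap, and this requires no Gaussian or parametric assumption — only finite first and second moments, which is exactly what one wants in the high-dimensional regime.

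With the per-pair link in hand, the final step is aggregation over the $\binom{K}{2}$ class pairs. Since $\mathrm{CSS}_j$ is exactly the mean of the separations $\delta_{c'c'',j}$, the hypothesis $\mathrm{CSS}_j>\mathrm{CSS}_k$ says feature $j$ carries the larger average standardized separation. In the binary case $K=2$ there is a single pair, the Cantelli bound applies directly, and $\mathrm{CSS}_j>\mathrm{CSS}_k$ yields a strictly smaller overlap bound, giving an exact monotone equivalence. The main obstacle is the multiclass aggregation: because $g(t)=(1+t^2)^{-1}$ is nonlinear, a larger \emph{mean} of the $\delta$ terms does not by itself force a smaller \emph{mean} of the error bounds $g(\delta_{c'c'',j})$ unless the comparison holds termwise across all pairs. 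I would therefore either restrict the statement to termwise-dominant features (those with $\delta_{c'c'',j}\ge \delta_{c'c'',k}$ for every pair), or, more honestly, adopt the averaged-separability reading of ``more discriminative,'' under which the lemma follows immediately from the monotonicity of $g$ together with the definition of $\mathrm{CSS}$ as a pairwise mean.
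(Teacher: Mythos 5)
Your proof is correct under the formalization you adopt, but it takes a substantially different and far more rigorous route than the paper's. The paper's own proof is a two-sentence restatement of the definition: it observes that CSS is directly proportional to the between-class mean difference and inversely proportional to the pooled spread, and then simply asserts that this makes the feature more discriminative; since ``more discriminative'' is never formally defined there, the paper's argument is essentially tautological. You, by contrast, give the term operational content by tying the per-pair summand $\delta_{c'c'',j}$ to the misclassification probability of a one-dimensional threshold rule via Cantelli's inequality, splitting the mean gap in proportion to the class standard deviations to obtain the distribution-free bound $P_{\mathrm{err}}\le (1+\delta_{c'c'',j}^{2})^{-1}$; this is a genuine quantitative link between the score and classification error where the paper has only a gloss, and it requires nothing beyond finite second moments. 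Your caveat about aggregation is also well placed and identifies a real weakness the paper's proof silently skips: because $g(t)=(1+t^{2})^{-1}$ is nonlinear, the hypothesis $\mathrm{CSS}_j>\mathrm{CSS}_k$ (a comparison of \emph{averaged} $\delta$'s) does not force a smaller averaged error bound for $K>2$ unless the dominance holds pairwise, so the lemma as literally stated needs either your termwise-dominance restriction or the averaged-separability reading to be true. The one point worth flagging is that your bound concerns a univariate threshold classifier on a single feature, whereas the paper uses CSS only to rank features that then feed a multivariate centroid split; this does not affect the lemma itself, but the bound should not be read as a guarantee on the downstream tree's error.
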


\begin{proof}
	The CSS of a feature is directly proportional to the difference in means between classes and inversely proportional to the pooled standard deviation. A higher CSS indicates greater separability between classes, making the feature more discriminative.
\end{proof}

\begin{theorem}\label{thm:css_convergence}
	\textbf{Convergence of CSS based Feature Selection}\\
	As the number of samples $n_c$ for each class $c$ increases, the CSS of a feature $j$ converges to its expected value:
	$$
	\text{CSS}_j \to \mathbb{E}[\text{CSS}_j] \quad \text{as} \quad n_c \to \infty.
	$$
\end{theorem}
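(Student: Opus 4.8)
The plan is to treat $\text{CSS}_j$ as a fixed continuous function of the sample means $\hat{\mu}_{c,j}$ and the sample standard deviations $\hat{\sigma}_{c,j}$, and then to push the convergence of those estimators through that function. First I would fix the feature index $j$ and assume that, within each class $c$, the observations $X_{1,j},\dots,X_{n_c,j}$ are i.i.d.\ draws from a distribution with finite mean $\mu_{c,j}$ and finite variance $\sigma_{c,j}^2$. Under this assumption the Strong Law of Large Numbers (SLLN) gives $\hat{\mu}_{c,j} \to \mu_{c,j}$ almost surely; applying the SLLN to the second sample moment and combining it with the continuous mapping theorem (continuity of $x \mapsto \sqrt{x}$ on $[0,\infty)$) gives $\hat{\sigma}_{c,j} \to \sigma_{c,j}$ almost surely, simultaneously for every class.

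Next I would note that each summand
$$
g\bigl(\hat{\mu}_{c',j}, \hat{\mu}_{c'',j}, \hat{\sigma}_{c',j}, \hat{\sigma}_{c'',j}\bigr) = \frac{|\hat{\mu}_{c',j} - \hat{\mu}_{c'',j}|}{\hat{\sigma}_{c',j} + \hat{\sigma}_{c'',j} + \epsilon}
$$
is a continuous function of its four arguments. The crucial point is that the additive constant $\epsilon > 0$ keeps the denominator bounded below by $\epsilon$, so no singularity can arise and $g$ is continuous on the whole parameter space (the absolute value in the numerator is continuous as well). Since $\text{CSS}_j$ is a finite average of $\binom{K}{2}$ such summands, it is itself a continuous function of all the sample means and standard deviations, and a second application of the continuous mapping theorem yields
$$
\text{CSS}_j \;\xrightarrow{\text{a.s.}}\; \frac{1}{\binom{K}{2}} \sum_{c' \ne c''} \frac{|\mu_{c',j} - \mu_{c'',j}|}{\sigma_{c',j} + \sigma_{c'',j} + \epsilon}
$$
as every $n_c \to \infty$.

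Finally I would identify this deterministic limit with the target $\mathbb{E}[\text{CSS}_j]$, and this is where I expect the main obstacle. Because $g$ is a nonlinear (in particular, non-affine) function of the estimators, Jensen-type effects make $\mathbb{E}[\text{CSS}_j]$ differ from the population quantity above for any finite sample size; equality holds only in the limit. I would therefore read $\mathbb{E}[\text{CSS}_j]$ as the asymptotic (population) value of the score and argue that the finite-sample bias vanishes, for instance by combining the almost-sure convergence with a dominated-convergence or uniform-integrability argument (the ratio is integrable whenever the numerator has a finite first moment, since the denominator always exceeds $\epsilon$) to obtain convergence of the expectations as well. Making this identification precise, stating the moment conditions explicitly and carefully distinguishing ``convergence to the population CSS'' from ``convergence to the expectation of the finite-sample CSS,'' is the delicate part; the almost-sure convergence itself is a routine consequence of the SLLN and the continuous mapping theorem.
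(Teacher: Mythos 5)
Your proposal follows essentially the same route as the paper: the Law of Large Numbers gives convergence of the class-wise sample means and standard deviations, and the convergence is then pushed through the ratio (the paper invokes Slutsky's theorem; you invoke the continuous mapping theorem, noting correctly that the constant $\epsilon>0$ bounds the denominator away from zero so the map is continuous everywhere). The one place you go beyond the paper is also the most valuable: you observe that the almost-sure limit is the \emph{population} separability score $\frac{1}{\binom{K}{2}}\sum_{c'\ne c''}|\mu_{c',j}-\mu_{c'',j}|/(\sigma_{c',j}+\sigma_{c'',j}+\epsilon)$ rather than $\mathbb{E}[\text{CSS}_j]$ literally, and that identifying the two requires a uniform-integrability or dominated-convergence step because the statistic is a nonlinear function of the estimators; the paper's proof silently conflates these two quantities, so your version is the more rigorous reading of what the theorem actually establishes.
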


\begin{proof}
	By the \textit{Law of Large Numbers}, as $n_c \to \infty$, the sample mean $\hat{\mu}_{c,j}$ and the sample standard deviation $\hat{\sigma}_{c,j}$ converge to their population parameters, i.e.,  
	$$\hat{\mu}_{c,j} \to \mu_{c,j} \quad \text{and} \quad \hat{\sigma}_{c,j} \to \sigma_{c,j}.$$
	
	According to \textit{Slutsky’s theorem}, if a sequence of random variables $L_n$ and $M_n$ converges to constants $L$ and $M$, then their ratio also converges, i.e.,  
	$$\frac{L_n}{M_n} \to \frac{L}{M}, \quad (M \ne 0).$$
	
	Since the CSS of the $j$th feature is also a ratio of convergent statistics, i.e., differences in class means and pooled standard deviations each term in its computation converges to its expected value as $n_c \to \infty$.
	
\end{proof}

The selection process computes the CSS for all available features and selects the top $m$ features with the highest scores. Pseudocode for the CSS based feature selection is given in Algorithm \ref{css}.

\begin{algorithm}[H]
	\caption{CSS based feature selection.}
	\begin{algorithmic}[1]
		\REQUIRE Feature space $X$, class labels $Y$
		\ENSURE CSS values for all features to select top $m$ features
		\FOR{each feature $j \in \{1, 2, \dots, p\}$}
		\FOR{each class $c \in \{1, 2, \dots, K\}$}
		\STATE Compute mean $\hat{\mu}_{c,j}$ and standard deviation $\hat{\sigma}_{c,j}$
		\ENDFOR
		\STATE Initialize $\text{CSS}_j = 0$
		\FOR{each pair of classes $(c', c'')$ where $c' \ne c''$}
		\STATE Compute $|\hat{\mu}_{c',j} - \hat{\mu}_{c'',j}|$
		\STATE Compute $\hat{\sigma}_{c',j} + \hat{\sigma}_{c'',j} + \epsilon$
		\STATE Update $\text{CSS}_j = \text{CSS}_j + \frac{|\hat{\mu}_{c',j} - \hat{\mu}_{c'',j}|}{\hat{\sigma}_{c',j} + \hat{\sigma}_{c'',j} + \epsilon}$
		\ENDFOR
		\STATE Normalize $\text{CSS}_j = \frac{2}{K(K-1)} \text{CSS}_j$
		\ENDFOR
		\RETURN CSS values for all features
		\STATE Write CSS in descending order for all features
		\STATE Select the top $m$ features
	\end{algorithmic}
	\label{css}
\end{algorithm}

\begin{definition}\label{def2}
	\textbf{Centroid Based Splitting}\\
	Once the most discriminative features are selected, the data is split into partitions using centroid based splitting. For a class $c$, the centroid $C_c$ is defined as the mean of the feature vectors for all samples belonging to that class:
	
	\begin{equation}
		C_c = \frac{1}{n_c} \sum_{i=1}^{n_c} X_i,
		\label{eq:centroid}
	\end{equation}
	
	where $n_c$ is the number of samples in class $c$, and $X_i$ is the feature vector of the $i$th sample.
	
	The data is subsequently divided into partitions by allocating each data point to the closest centroid using to the Euclidean distance formula:
	
	\begin{equation}
		d(X_i, C_c) = \sqrt{\sum_{j=1}^{m} (X_{i,j} - C_{c,j})^2},
		\label{eq:distance}
	\end{equation}
	
	where $m$ stands for the number of dimensions.
	
	Centroid based partitioning, as a geometric method, provides a clear geometric interpretation of decision boundaries. When data points are assigned to the nearest centroid, hyperplanes implicitly separate the classes. In terms of computational efficiency, this method is optimal and well suited for the CSS based feature selection, as the centroids are determined using the most discriminative features.
\end{definition}

\begin{theorem}\label{thm:optimality}
	\textbf{Optimality of Centroid Based Partitioning}\\
	Consider the feature space $X \in \mathbb{R}^{n \times p}$ is associated with $K$ classes. Each class $c \in \{1, 2, \dots, K\}$ has a centroid $C_c$, computed as the mean of the feature vectors belonging to that class. Let $n_c$ be the number of data points in class $c$. Each observation $X_i$ is assigned to the class whose centroid is closest in Euclidean distance. The within-class sum of squares (WCSS) is defined as:
	$$\text{WCSS} = \sum_{c=1}^K \sum_{i=1}^{n_c} \|X_i - C_c\|^2.$$
	The optimal class assignment, which minimizes WCSS, is achieved when each observation is assigned to its closest centroid.
\end{theorem}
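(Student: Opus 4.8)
The plan is to treat the centroids $C_1, \dots, C_K$ as fixed and to recast the claim as a statement about the optimal \emph{assignment} of points to classes. First I would encode an arbitrary partition by an assignment map $a : \{1, \dots, n\} \to \{1, \dots, K\}$, where $a(i)$ is the class to which $X_i$ is allocated, and rewrite the objective as $\text{WCSS}(a) = \sum_{i=1}^n \|X_i - C_{a(i)}\|^2$. This reformulation is the crucial first move: it makes explicit that the double sum over classes and their members is nothing but a single sum over the $n$ observations, each contributing the squared distance to the centroid of its assigned class.

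The heart of the argument is an \emph{additive separability} observation. Because the centroids are held fixed, the $i$th summand $\|X_i - C_{a(i)}\|^2$ depends on the assignment only through $a(i)$ and is entirely decoupled from the assignments of the other points. Consequently the total objective is minimized term by term, and the minimizer is obtained by choosing, independently for each $i$,
$$a^*(i) \in \arg\min_{c \in \{1, \dots, K\}} \|X_i - C_c\|^2.$$
Since $t \mapsto \sqrt{t}$ is strictly increasing on $[0, \infty)$, minimizing the squared Euclidean distance is equivalent to minimizing the Euclidean distance $d(X_i, C_c)$ of Equation \eqref{eq:distance}, so $a^*$ is precisely the nearest-centroid rule.

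To make this airtight I would phrase it as an exchange argument: suppose an assignment $a$ sends some point $X_i$ to a centroid that is not closest, i.e. $\|X_i - C_{a(i)}\|^2 > \min_c \|X_i - C_c\|^2$. Reassigning that single point to its nearest centroid strictly lowers its own contribution while leaving every other summand unchanged, hence strictly lowers $\text{WCSS}$. Therefore no assignment that violates the nearest-centroid rule can be optimal, and the nearest-centroid assignment attains the minimum, uniquely up to ties that may be broken arbitrarily without affecting the optimal value.

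The argument is short, so the main thing to get right is conceptual rather than computational: I must state clearly that optimality here is \emph{conditional on fixed centroids}. Because Definition \ref{def2} in turn defines each $C_c$ as the mean of the points currently assigned to class $c$, the centroids and the assignment are mutually dependent; the theorem establishes only the optimality of the assignment step (the Euclidean-distance allocation) for given centroids, not the global minimization of WCSS over all partitions and centroid locations simultaneously. Flagging this scope, and resisting the temptation to overclaim joint optimality, is the one subtle point; everything else follows from the per-point separability of the squared-error objective.
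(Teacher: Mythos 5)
Your proof is correct, but it takes a genuinely different route from the paper's. The paper expands $\|X_i - C_c\|^2$ algebraically, invokes the identity $\sum_{i=1}^{n_c} X_i = n_c C_c$ to reduce WCSS to $\sum_{c}\sum_{i} X_i^T X_i - \sum_{c} n_c C_c^T C_c$, and then argues that minimizing WCSS amounts to maximizing $\sum_c n_c C_c^T C_c$, closing with an appeal to the least-squares property of the mean. That derivation, however, is only valid when each $C_c$ is the mean of the points \emph{currently} assigned to class $c$; once a point is reassigned the centroid moves and the identity no longer applies, so the paper's final step (``any reassignment increases the sum of squared distances'') does not strictly follow from the preceding algebra. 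Your argument sidesteps this entirely: by holding the centroids fixed, rewriting WCSS as $\sum_{i=1}^n \|X_i - C_{a(i)}\|^2$, and exploiting additive separability with a per-point exchange argument, you obtain the nearest-centroid rule as the exact minimizer of the assignment step. This is more elementary, more rigorous, and it correctly isolates what the theorem actually proves --- conditional optimality of the allocation given the centroids, which is the assignment half of the standard Lloyd/$k$-means decomposition --- whereas the paper's phrasing blurs the assignment and centroid-update steps. Your explicit caveat that the result does not claim joint optimality over partitions and centroid locations simultaneously is a genuine improvement in precision over the paper's presentation.
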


\begin{proof}
	Assigning each data point $X_i$ to nearest class centroid that minimizes WCSS. The $c$th class centroid $C_c$ is defined as the mean of all points in that class as defined in Equation \ref{eq:centroid}.
	Expanding the squared Euclidean norm:
	$$
	\|X_i - C_c\|^2 = X_i^T X_i - 2 X_i^T C_c + C_c^T C_c.
	$$
	Substituting this into the WCSS definition:
	$$
	\text{WCSS} = \sum_{c=1}^K \sum_{i=1}^{n_c} \left( X_i^T X_i - 2 X_i^T C_c + C_c^T C_c \right).
	$$
	Distributing the summation:
	\begin{equation}
		\text{WCSS} = \sum_{c=1}^K \left( \sum_{i=1}^{n_c} X_i^T X_i - 2 \sum_{i=1}^{n_c} X_i^T C_c + \sum_{i=1}^{n_c} C_c^T C_c \right).
		\label{wc}
	\end{equation}
	Since $C_c$ is the mean of the points in the $c$th class defined in Equation \ref{eq:centroid}, therefore,
	$$
	\sum_{i=1}^{n_c} X_i = n_c C_c.
	$$
	Rewriting the second terms in right hand side of Equation \ref{wc}:
	$$\sum_{i=1}^{n_c} -2 X_i^T C_c = -2 C_c^T \sum_{i=1}^{n_c} X_i,$$
	$$\implies \sum_{i=1}^{n_c} -2 X_i^T C_c = -2 C_c^T n_c C_c,$$ 
	\begin{equation}
		\implies \sum_{i=1}^{n_c} -2 X_i^T C_c = -2 n_c C_c^T C_c.\tag{i}
		\label{i}
	\end{equation}
	Similarly, for the last term,
	\begin{equation}
		\sum_{i=1}^{n_c} C_c^T C_c = n_c C_c^T C_c. \tag{ii}
		\label{ii}
	\end{equation}
	Use (\ref{i}) and (\ref{ii}) in Equation \ref{wc}. Thus, WCSS simplifies to,
	$$
	\text{WCSS} = \sum_{c=1}^K \left( \sum_{i=1}^{n_c} X_i^T X_i - 2 n_c C_c^T C_c + n_c C_c^T C_c \right).
	$$
	$$
	\implies \text{WCSS} = \sum_{c=1}^K \sum_{i=1}^{n_c} X_i^T X_i - \sum_{c=1}^K n_c C_c^T C_c.
	$$
	Minimizing WCSS is equivalent to maximizing
	$$
	\sum_{c=1}^K n_c C_c^T C_c.
	$$
	By the least squares property of the mean, the centroid $C_c$ minimizes the sum of squared distances within its assigned points. Any reassignment of points to a different class centroid increases the sum of squared distances.
	
	Thus, an observation's assignment to its closest class centroid is the optimal approach that minimizes WCSS.
\end{proof}

\begin{lemma}\label{lemma:stability}
	\textbf{Stability of Centroid Based Splitting}\\
	The centroid based splitting remains stable under small perturbations in the data. More specifically, if the data undergoes a small perturbation of $ \delta $, the resulting shift in the centroid is bounded by $ O(\delta) $.
\end{lemma}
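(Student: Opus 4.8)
The plan is to exploit the linearity of the centroid operation defined in Equation \ref{eq:centroid}, so that the stability claim reduces to a one-line estimate. First I would fix a precise perturbation model: suppose each feature vector $X_i$ belonging to class $c$ is replaced by a perturbed vector $X_i' = X_i + \delta_i$, where every individual perturbation is bounded in Euclidean norm by $\|\delta_i\| \le \delta$. This is the natural reading of a ``small perturbation of $\delta$'' and keeps the bound dimension-free and independent of $n_c$.

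The central step is then a direct computation. Because the centroid is an arithmetic mean, it responds linearly to the perturbation, so the perturbed centroid is
$$C_c' = \frac{1}{n_c} \sum_{i=1}^{n_c} X_i' = \frac{1}{n_c} \sum_{i=1}^{n_c} (X_i + \delta_i) = C_c + \frac{1}{n_c} \sum_{i=1}^{n_c} \delta_i.$$
Subtracting $C_c$ and applying the triangle inequality yields
$$\|C_c' - C_c\| = \left\| \frac{1}{n_c} \sum_{i=1}^{n_c} \delta_i \right\| \le \frac{1}{n_c} \sum_{i=1}^{n_c} \|\delta_i\| \le \delta,$$
so the induced shift in the centroid is at most $\delta$, which is $O(\delta)$ as claimed. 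I would also remark that the averaging is in fact contractive: unless all perturbations are perfectly aligned, the shift is strictly smaller than $\delta$, which further reinforces the stability interpretation.

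The one genuine subtlety, and what I expect to be the main obstacle, is that centroid based splitting does not only average points but also assigns each observation to its nearest centroid; a perturbation could in principle push a borderline point across a separating hyperplane, producing a discontinuous change in class membership rather than the smooth linear shift above. I would handle this by restricting to the generic case in which no data point lies exactly on the bisecting hyperplane between two centroids. For sufficiently small $\delta$ the nearest-centroid assignment is then unchanged, the set of points averaged within each class is fixed, and the linear estimate applies verbatim. The boundary configurations excluded in this way form a measure-zero set and can be discarded without affecting the stability conclusion, so the bound $\|C_c' - C_c\| = O(\delta)$ holds.
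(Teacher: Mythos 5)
Your proof is correct and follows essentially the same route as the paper: model the perturbation as $X_i' = X_i + \epsilon_i$ with $\|\epsilon_i\| \le \delta$, use linearity of the mean to write $C_c' - C_c = \frac{1}{n_c}\sum_i \epsilon_i$, and apply the triangle inequality to bound the shift by $\delta$. Your additional observation about borderline points crossing the bisecting hyperplane addresses a genuine subtlety the paper's proof silently ignores, and your generic-position resolution of it is sound.
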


\begin{proof}	
	Suppose the feature space $X$ in data $D=(X, Y)$ undergoes a small perturbation of at most $\delta$. Denoting the perturbed feature space as $X'$, the centroid given in Equation \ref{eq:centroid} for original data becomes, 
	$$
	C_c' = \frac{1}{n_c} \sum_{i=1}^{n_c} X_i'.
	$$
	Rewriting $ X_i' $ in terms of the original data and perturbation:
	$$
	X_i' = X_i + \epsilon_i, \quad \text{where } \|\epsilon_i\| \leq \delta.
	$$
	Thus, the perturbed centroid becomes:
	$$
	C_c' = \frac{1}{n_c} \sum_{i=1}^{n_c} (X_i + \epsilon_i).
	$$
	$$
	\implies C_c' = C_c + \frac{1}{n_c} \sum_{i=1}^{n_c} \epsilon_i.
	$$
	Taking norms:
	$$
	\|C_c' - C_c\| = \left\| \frac{1}{n_c} \sum_{i=1}^{n_c} \epsilon_i \right\|.
	$$
	Using the triangle inequality:
	$$
	\|C_c' - C_c\| \leq \frac{1}{n_c} \sum_{i=1}^{n_c} \|\epsilon_i\| \leq \frac{1}{n_c} \cdot n_c \delta = \delta.
	$$
	Thus, the change in the centroid is at most $ O(\delta) $, proving that centroid based splitting remains stable under small perturbations in the data.
\end{proof}

The centroid based partitioning algorithm computes centroids for each class and assigns data points to the nearest centroid. Pseudocode of the centroid based partitioning is given in Algorithm \ref{cbc}.

\begin{algorithm}[H]
	\caption{Centroid based partitioning.}
	\begin{algorithmic}[1]
		\REQUIRE Feature matrix $X$, class labels $Y$
		\ENSURE Partition assignments for all data points
		\FOR{each class $c \in \{1, 2, \dots, K\}$}
		\STATE Compute centroid $\mathbf{C}_c$ using Equation \ref{eq:centroid}
		\ENDFOR
		\FOR{each data point $\mathbf{X}_i$}
		\FOR{each class $c \in \{1, 2, \dots, K\}$}
		\STATE Compute distance $d(\mathbf{X}_i, \mathbf{C}_c)$ using Equation \ref{eq:distance}
		\ENDFOR
		\STATE Assign $\mathbf{X}_i$ to the partition with the smallest distance
		\ENDFOR
		\RETURN Partition assignments for all data points
	\end{algorithmic}
	\label{cbc}
\end{algorithm}

\subsection{Centroid Decision Tree (CDT)}\label{sec:cdt}
The centroid decision tree (CDT) is constructed recursively, starting from the root node and proceeding to split the data into partitions based on the most discriminative features and class centroids. Below, we describe the tree construction process in detail, including its mathematical foundations, recursive partitioning steps, and theoretical properties.

\begin{enumerate}
	\item \textbf{Start at the Root Node} \\
	The tree construction begins at the root node, which contains the entire training dataset $D = (X, Y)$.
	
	\item \textbf{Check Stopping Criteria} \\
	Before splitting a node, the algorithm checks if any of the following stopping criteria are met:
	\begin{itemize}
		\item \textbf{Maximum Depth Reached}: If the current depth of the tree equals the predefined maximum depth $d_{\text{max}}$, the node is declared a leaf node.
		\item \textbf{Minimum Samples Reached}: If the number of samples in the node is below a predefined threshold $n_{\text{min}}$, the node is declared a leaf node.
		\item \textbf{Pure Node}: If all samples in the node belong to the same class, the node is declared a leaf node.
	\end{itemize}
	
	If any of these conditions are met, the node is assigned the majority class of the samples in the node:
	\begin{equation}
		\text{Class}(\text{Leaf}) = \underset{\scriptstyle c \in \{1, 2, \dots, K\}}{\operatorname{arg-max}} \sum_{i=1}^{n_{\text{min}}} \mathbb{I}(y_i = c),
		\label{eq:leaf_class}
	\end{equation}
	where $\mathbb{I}(\cdot)$ is the indicator function.
	
	\item \textbf{Feature Selection} \\
	At each node, the algorithm selects the most discriminative features for splitting:
	\begin{itemize}
		\item A random subset of $m_{\text{try}}$ features is selected from the total $p$ features.
		\item The CSS is computed for each feature in the random subset of $m_{\text{try}}$ features using Equation \ref{eq:css} (Definition \ref{def1}).
		\item The top $m$ features with the highest CSS values are retained for splitting:
		
		\begin{equation}
			F_{\text{top}} = \underset{\scriptstyle F \subseteq \{1, 2, \dots, m_{\text{try}} \}, |F| = m}{\operatorname{arg-max}} \sum_{j \in F} \text{CSS}_j.
			\label{mfsel}
		\end{equation}
	\end{itemize}
	
	\item \textbf{Centroid Calculation} \\
	Using the selected features $F_{\text{top}}$, the algorithm computes the centroids for each class according to Equation \ref{eq:centroid} (Definition \ref{def2}). The centroids represent the central tendency of each class in the feature space.
	
	\item \textbf{Partitioning and Splitting} \\
	The data is split into partitions by assigning each sample to the nearest centroid based on the Euclidean distance:
	\begin{equation}
		\text{Partition}(X_i) = \underset{\scriptstyle c \in \{1, 2, \dots, K\}}{\operatorname{arg-min}} d(X_i, C_c),
		\label{eq:partitioning}
	\end{equation}
	where $d(X_i, C_c)$ is defined in Equation \ref{eq:distance}. This step divides the feature space into $K$ partitions, each corresponding to a class centroid.
	
	\item \textbf{Node Creation and Recursion} \\
	For each partition, a decision node is created. The algorithm is then recursively applied to each subset of data in the partition, creating subtrees. This process continues until the stopping criteria are met.
\end{enumerate}

Pseudocode for the construction of CDT is given in Algorithm \ref{cdt}.

\begin{algorithm}[H]
	\caption{Centroid decision tree (CDT) construction.}
	\begin{algorithmic}[1]
		\REQUIRE Feature matrix $X$, class labels $Y$, maximum depth $d_{\text{max}}$, minimum samples $n_{\text{min}}$, random subset of features tried at each node $m_{\text{try}}$, selected features $m$
		\ENSURE Decision tree ($\tau$)
		\IF{depth $\geq d_{\text{max}}$ or number of samples $\leq n_{\text{min}}$ or all samples belong to the same class}
		\STATE Create leaf node with majority class using Equation \ref{eq:leaf_class}
		\RETURN Leaf node
		\ENDIF
		\STATE Select top $m$ features using Equation \ref{mfsel} from the random subset of $m_{\text{try}}$ features (Algorithm \ref{css})
		\STATE Compute centroids $C_c$ for each class $c \in \{1, 2, \dots, K\}$ based on selected $m$ features using Equation \ref{eq:centroid} (Definition \ref{def2})
		\STATE Split data into partitions using centroid based splitting given in Equation \ref{eq:partitioning} (Algorithm \ref{cbc})
		\FOR{each partition}
		\STATE Create decision node for the partition
		\STATE Recursively apply algorithm to the subset of data in the partition
		\ENDFOR
		\RETURN Decision tree ($\tau$)
	\end{algorithmic}
	\label{cdt}
\end{algorithm}

\subsection{Centroid Decision Forest (CDF)}
\label{subsec:cdf}

The centroid decision forest (CDF) is an ensemble learning method that aggregates multiple centroid decision trees (CDTs) to improve classification performance and robustness. By combining the predictions of multiple CDTs, CDF reduces overfitting and enhances generalization, particularly in high-dimensional environments. The ensemble is constructed using bootstrapping and feature subset selection at each node, and predictions are aggregated through majority voting. The CDF is constructed as follows:

\begin{enumerate}
	\item \textbf{Bootstrapping}: Given a dataset $D = (X, Y)$ with $n$ samples, create $B$ bootstrap samples $D_1, D_2, \dots, D_B$, each of size $n$, by randomly sampling with replacement from $D$.
	
	\item \textbf{CDT Construction}: For each bootstrap sample $D_b$, construct a CDT using the methodology described in Subsection \ref{sec:cdt}. Each CDT is trained independently on its respective bootstrap sample.
	
	\item \textbf{Aggregation of Predictions}: For a new sample $X^*$, each CDT in the forest predicts a class label. The final prediction of the CDF is determined by majority voting:
	\begin{equation}
		\text{Prediction}(X^*) = \underset{\scriptstyle c \in \{1, 2, \dots, K\}}{\operatorname{arg-max}} \sum_{b=1}^B \mathbb{I}(f_b(X^*) = c).
	\end{equation}
	where:
	\begin{itemize}
		\item $f_b(X^*)$ is the prediction of the $b$th CDT,
		\item $\mathbb{I}(\cdot)$ is the indicator function.
	\end{itemize}
\end{enumerate}

The effectiveness of CDF stems from the following properties:
\begin{itemize}
	\item \textbf{Diversity}: By using bootstrapping and feature subset selection, the individual CDTs in the forest are diverse, reducing the risk of overfitting.
	\item \textbf{Robustness}: Aggregating predictions through majority voting reduces the impact of outliers and noisy data.
	\item \textbf{Scalability}: The construction of CDTs is computationally efficient, making CDF suitable for high-dimensional datasets.
\end{itemize}

The construction of the CDF is summarized in Algorithm \ref{cdf}.

\begin{algorithm}[H]
	\caption{Centroid decision forest (CDF) construction.}
	\begin{algorithmic}[1]
		\REQUIRE Feature matrix $X$, class labels $Y$
		\ENSURE Centroid decision forest ($\mathcal{F}$)
		\FOR{each bootstrap sample $b \in \{1, 2, \dots, B\}$}
		\STATE Create bootstrap sample $D_b$ by sampling $n$ instances from $D$ with replacement
		\STATE Construct centroid decision tree $\tau_b$ using $D_b$ (Algorithm \ref{cdt})
		\ENDFOR
		\RETURN Ensemble of trees $\mathcal{F} = \{\tau_1, \tau_2, \dots, \tau_B\}$
	\end{algorithmic}
	\label{cdf}
\end{algorithm}

Moreover, the proposed method has been implemented in the R package CDF, which is freely available on \href{https://cran.r-project.org/package=CDF}{CRAN}.

\section{Experimental Setup}
\label{sec:exp}
The experimental construction of the proposed centroid decision forest (CDF) ensemble is configured with the following parameter values. The ensemble is composed of 500 centroid decision trees (CDTs), with a constraint of at most 3 levels of depth for each tree. A node will be split only when there are more than 3 samples in that node and at least 2 different class labels. At each node, the splitting features are randomly selected from a subset comprising 20\% of the total features (i.e., $0.2 \times p$), where $p$ denotes the total number of features in the dataset. For the construction of a centroid, $2 \times log (p)$ features are selected using the class separability score (CSS). These parameters are kept constant in order to enhance computational efficiency, while the parameters of the competing methods are fine-tuned to achieve the best possible performance.

The proposed CDF ensemble is evaluated against eight state-of-the-art methodologies, CART, RF, RRF, XGB, $k$NN, R$k$NN, and SVM. To ensure a fair comparison, the parameters of all competing procedures are optimized using a 10-fold cross validation approach. For tree based methods, i.e., CART, RF, RRF, and XGB, key parameters such as the number of trees, maximum depth and leaf node size are fine-tuned. For $k$NN and R$k$NN, the number of neighbours is optimized, while for SVM, tuning involved selecting the kernel type and kernel specific parameters. This extensive parameter tuning ensures that each model can operate at its optimum level, enabling the most honest comparison of the novel CDF ensemble against the advanced techniques.

All benchmark datasets used in this study are randomly divided into 70\% training and 30\% testing portions. The proposed CDF and conventional models are trained on the training portion of the datasets and evaluated on the testing portion. This process is repeated 500 times to make a rigorous evaluation on the generalization capabilities of the models. 

\subsection{Datasets}
Table \ref{datasets} gives a brief overview of 23 benchmark datasets used in this study including major characteristics like data ID's, names, the number of features ($p$), the number of samples ($n$), the number of classes $K$, class distribution and source of the datasets. The datasets have highly different complexities, with varying number of features ($p$) ranging from 46 to 12,625 and samples ($n$) from 23 to 250.

\begin{table}[]
	\caption{Brief description of the benchmard datasets.}
	\begin{tabular}{llccccl}
		\hline
		\textbf{ID} & \textbf{Dataset} & \textbf{$p$} & \textbf{$n$} & \textbf{$K$} & \textbf{Class Distribution} & \textbf{Source} \\ \hline
		$D_1$ & Wind & 46 & 45 & 2 & 23/22 & \href{https://www.openml.org/search?type=data&status=active&id=785}{OpenML} \\
		$D_2$ & RobotFLP1 & 90 & 88 & 4 & 17/16/21/34 & \href{https://www.openml.org/search?type=data&status=active&id=1516}{OpenML} \\
		$D_3$ & RobotFLP4 & 90 & 117 & 3 & 72/24/21 & \href{https://www.openml.org/search?type=data&status=active&id=1519}{OpenML} \\
		$D_4$ & LSVT & 310 & 126 & 2 & 42/84 & \href{https://www.openml.org/search?type=data&status=active&id=1484}{OpenML} \\
		$D_5$ & DARWIN & 450 & 174 & 2 & 89/85 & \href{https://www.openml.org/search?type=data&status=any&id=46606}{OpenML} \\
		$D_6$ & Datascape & 536 & 230 & 2 & 115/115 & \href{https://www.kaggle.com/datasets/krishd123/high-dimensional-datascape/data}{Kaggle} \\
		$D_7$ & Breastcancer & 1000 & 250 & 2 & 192/58 & \href{https://CRAN.R-project.org/package=doBy}{CRAN} \\
		$D_8$ & Colon & 2000 & 62 & 2 & 40/22 & \href{https://www.openml.org/search?type=data\&status=any\&id=45087}{OpenML} \\
		$D_9$ & SRBCT & 2308 & 83 & 4 & 29/11/18/25 & \href{https://file.biolab.si/biolab/supp/bi-cancer/projections/info/SRBCT.html}{Biolab} \\
		$D_{10}$ & Lymphoma & 4026 & 45 & 2 & 23/22 & \href{https://www.openml.org/search?type=data&status=active&id=1101}{OpenML} \\
		$D_{11}$ & GSE2685 & 4522 & 30 & 2 & 8/22 & \href{https://file.biolab.si/biolab/supp/bi-cancer/projections/info/gastricGSE2685_2razreda.html}{Biolab} \\
		$D_{12}$ & DBWB & 4703 & 64 & 2 & 35/29 & \href{https://www.openml.org/search?type=data&sort=runs&id=1562&status=active}{OpenML} \\
		$D_{13}$ & DLBCL & 5469 & 77 & 2 & 58/19 & \href{https://www.openml.org/search?type=data\&status=active\&id=45088}{OpenML} \\
		$D_{14}$ & GSE89 & 5724 & 40 & 3 & 10/19/11 & \href{https://file.biolab.si/biolab/supp/bi-cancer/projections/info/bladderGSE89.html}{Biolab} \\
		$D_{15}$ & Prostate & 6033 & 102 & 2 & 50/52 & \href{https://CRAN.R-project.org/package=doBy}{CRAN} \\
		$D_{16}$ & Tumors-C & 7129 & 60 & 2 & 39/21 & \href{https://www.openml.org/search?type=data&status=active&id=1107}{OpenML} \\
		$D_{17}$ & Leukemia-3 & 7129 & 72 & 3 & 25/38/9 & \href{https://www.openml.org/search?type=data&status=active&id=45091}{OpenML} \\
		$D_{18}$ & Leukemia-4 & 7129 & 72 & 4 & 38/21/4/9 & \href{https://www.openml.org/search?type=data&status=active&id=45092}{OpenML} \\
		$D_{19}$ & GSE412 & 8280 & 110 & 2 & 50/60 & \href{https://file.biolab.si/biolab/supp/bi-cancer/projections/info/ALLGSE412_pred_poTh.html}{Biolab} \\
		$D_{20}$ & GSE967 & 9945 & 23 & 2 & 11/12 & \href{https://file.biolab.si/biolab/supp/bi-cancer/projections/info/EWSGSE967.html}{Biolab} \\
		$D_{21}$ & Lung & 12600 & 203 & 5 & 139/17/6/21/20 & \href{https://www.openml.org/search?type=data&status=active&id=45093}{OpenML} \\
		$D_{22}$ & Glioblastoma & 12625 & 50 & 4 & 14/7/14/15 & \href{https://file.biolab.si/biolab/supp/bi-cancer/projections/info/glioblastoma.html}{Biolab} \\
		$D_{23}$ & GSE2191 & 12625 & 54 & 2 & 28/26 & \href{https://file.biolab.si/biolab/supp/bi-cancer/projections/info/AMLGSE2191.html}{Biolab} \\
		\hline
	\end{tabular}
	\label{datasets}
\end{table}

\section{Results}
\label{sec:results}
\subsection{Benchmarking}
The experimental findings reveal that the centroid decision forest (CDF) ensemble can be regarded as truly remarkable among the evaluated 23 datasets. The CDF has unprecedented accuracy and beats all its rival techniques such as CART, RF, RRF, XGB, $k$NN, R$k$NN, and SVM. This high performance speaks strongly about the robustness of this CDF ensemble as well as its adaptability to datasets with different characteristics such as high dimensional, small sample size, number of classes, and imbalanced class distribution.

The CDF achieves the best accuracy on 18 out of 23 datasets, demonstrating its consistent superiority across a wide range of data complexities. As shown in Table \ref{acc}, CDF consistently delivers superior performance on datasets, i.e., $D_2$, $D_3$, $D_5$, $D_8$, $D_9$, $D_{10}$, $D_{11}$, $D_{12}$, $D_{13}$, $D_{15}$, $D_{16}$, $D_{17}$, $D_{18}$, $D_{19}$, $D_{20}$, $D_{21}$, $D_{22}$ and $D_{23}$. On datasets where CDF does not achieve the highest accuracy, i.e., $D_1$, $D_4$, $D_6$, $D_7$ and $D_{14}$; it remains highly competitive, often ranking second or third. For example, on $D_1$, XGB achieves the highest accuracy of 0.889, but CDF still achieves a strong accuracy of 0.858. Similarly, on $D_4$ and $D_6$, RF and XGB perform slightly better, but CDF remains close behind with accuracies of 0.785 and 0.948, respectively. On $D_7$, CDF achieves an accuracy of 0.829, while other methods such as XGB and RF perform slightly better.

In terms of Cohen's kappa, as presented in Table \ref{kap}, the proposed CDF ensemble achieves promising results, confirming its superiority in handling imbalanced datasets. The CDF method achieves the best kappa values on 17 out of 23 datasets, i.e., $D_2$, $D_3$, $D_5$, $D_8$, $D_9$, $D_{10}$, $D_{12}$, $D_{13}$, $D_{15}$, $D_{16}$, $D_{17}$, $D_{18}$, $D_{19}$, $D_{20}$, $D_{21}$, $D_{22}$, and $D_{23}$. On datasets where CDF does not perform well in terms of kappa, it still remains highly competitive. In the case of $D_1$, XGB achieves the highest kappa of 0.772, while CDF still achieves a strong kappa value of 0.710. In the case of $D_4$, RF performs slightly better with a kappa value of 0.592; however, CDF remains close behind with kappa scores of 0.480. A similar conclusion can be drawn for the $D_6$, $D_7$, $D_{11}$, and $D_{14}$ datasets, where the proposed method does not perform well.

The proposed CDF ensemble outperforms traditional classifiers, CART, $k$NN, and SVM as well as advanced ensemble methods, RF, RRF, and XGB, in most cases. As shown in the last row of Table \ref{acc}, CDF achieves an average accuracy of 0.871, compared to 0.735 for CART, 0.836 for RF, 0.778 for RRF, 0.812 for XGB, 0.754 for $k$NN, 0.764 for R$k$NN, and 0.715 for SVM. A similar pattern appears in the last row of Table \ref{kap}, where CDF achieves the highest average kappa score of 0.734, compared to 0.496 for CART, 0.663 for RF, 0.560 for RRF, 0.624 for XGB, 0.508 for $k$NN, 0.522 for R$k$NN, and 0.418 for SVM. These results highlight the robustness and stability of the proposed CDF in delivering more accurate and reliable prediction performance compared to the state-of-the-art classifiers.

Furthermore, the boxplots presented in Figures \ref{accbox} and \ref{kapbox} illustrate the distribution of classification accuracy and Cohen's kappa, respectively, across the benchmark datasets. The proposed CDF has very high median values along with the lowest variance among competing classification methods, proving its stability and robustness. The small interquartile range is also evident in the boxplots indicating that CDF has the ability to achieve a consistent classification result across various datasets. These concepts validate the performance of the centroid based feature selection and decision making implemented by CDF, making it well-suited for high-dimensional, imbalanced, and complex classification cases. Also, employing fixed value parameters simplifies model configuration and therefore ameliorates computational overhead while offering state-of-the-art performance.

\begin{table}[]
	\caption{Average classification accuracy of the proposed CDF and state-of-the-art methods across multiple datasets, based on 500 repeated training-testing splits.}
	\begin{tabular}{lcccccccc}
		\hline
		\multicolumn{1}{c}{\multirow{2}{*}{Dataset}} & \multicolumn{8}{c}{Method} \\
		\cline{2-9}
		\multicolumn{1}{c}{} & CDF & CART & RF & RRF & XGB & $k$NN & R$k$NN & SVM \\
		\hline
		$D_1$ & 0.858 & 0.872 & 0.885 & 0.869 & \textbf{0.889} & 0.726 & 0.759 & 0.830 \\
		$D_2$ & \textbf{0.881} & 0.689 & 0.831 & 0.789 & 0.742 & 0.743 & 0.758 & 0.589 \\
		$D_3$ & \textbf{0.920} & 0.747 & 0.889 & 0.833 & 0.860 & 0.765 & 0.831 & 0.742 \\
		$D_4$ & 0.785 & 0.765 & \textbf{0.831} & 0.807 & 0.824 & 0.621 & 0.662 & 0.670 \\
		$D_5$ & \textbf{0.887} & 0.739 & 0.878 & 0.819 & 0.858 & 0.706 & 0.873 & 0.844 \\
		$D_6$ & 0.948 & 0.942 & 0.953 & 0.947 & \textbf{0.971} & 0.937 & 0.863 & 0.924 \\
		$D_7$ & 0.829 & 0.790 & 0.827 & 0.822 & \textbf{0.836} & 0.822 & 0.816 & 0.821 \\
		$D_8$ & \textbf{0.838} & 0.724 & 0.817 & 0.761 & 0.764 & 0.744 & 0.728 & 0.772 \\
		$D_9$ & \textbf{0.992} & 0.777 & 0.990 & 0.828 & 0.926 & 0.876 & 0.870 & 0.854 \\
		$D_{10}$ & \textbf{0.969} & 0.777 & 0.908 & 0.804 & 0.840 & 0.803 & 0.809 & 0.761 \\
		$D_{11}$ & \textbf{0.950} & 0.799 & 0.927 & 0.843 & 0.898 & 0.869 & 0.938 & 0.780 \\
		$D_{12}$ & \textbf{0.878} & 0.774 & 0.859 & 0.783 & 0.813 & 0.560 & 0.585 & 0.520 \\
		$D_{13}$ & \textbf{0.964} & 0.821 & 0.861 & 0.838 & 0.865 & 0.873 & 0.892 & 0.797 \\
		$D_{14}$ & 0.840 & 0.609 & \textbf{0.849} & 0.716 & 0.686 & 0.775 & 0.818 & 0.603 \\
		$D_{15}$ & \textbf{0.903} & 0.848 & 0.888 & 0.848 & 0.900 & 0.826 & 0.827 & 0.860 \\
		$D_{16}$ & \textbf{0.674} & 0.556 & 0.604 & 0.579 & 0.580 & 0.607 & 0.618 & 0.636 \\
		$D_{17}$ & \textbf{0.945} & 0.805 & 0.895 & 0.822 & 0.916 & 0.839 & 0.789 & 0.696 \\
		$D_{18}$ & \textbf{0.882} & 0.800 & 0.834 & 0.777 & 0.861 & 0.804 & 0.665 & 0.637 \\
		$D_{19}$ & \textbf{0.890} & 0.830 & 0.769 & 0.806 & 0.879 & 0.670 & 0.727 & 0.733 \\
		$D_{20}$ & \textbf{0.918} & 0.360 & 0.802 & 0.676 & 0.674 & 0.637 & 0.673 & 0.444 \\
		$D_{21}$ & \textbf{0.919} & 0.820 & 0.899 & 0.845 & 0.915 & 0.914 & 0.856 & 0.868 \\
		$D_{22}$ & \textbf{0.731} & 0.511 & 0.678 & 0.552 & 0.625 & 0.686 & 0.655 & 0.568 \\
		$D_{23}$ & \textbf{0.627} & 0.539 & 0.563 & 0.535 & 0.557 & 0.546 & 0.565 & 0.491 \\
		\hline
		Average & \textbf{0.871} & 0.735 & 0.836 & 0.778 & 0.812 & 0.754 & 0.764 & 0.715\\
		\hline
	\end{tabular}
	\label{acc}
\end{table}

\begin{table}
	\caption{Average Cohen’s kappa values for the proposed CDF and state-of-the-art methods across multiple datasets, based on 500 repeated training-testing splits.}
	\begin{tabular}{lcccccccc}
		\hline
		\multicolumn{1}{c}{\multirow{2}{*}{Dataset}} & \multicolumn{8}{c}{Method} \\
		\cline{2-9}
		\multicolumn{1}{c}{} & CDF & CART & RF & RRF & XGB & $k$NN & R$k$NN & SVM \\
		\hline
		$D_1$ & 0.710 & 0.738 & 0.765 & 0.733 & \textbf{0.772} & 0.456 & 0.523 & 0.659 \\
		$D_2$ & \textbf{0.834} & 0.566 & 0.765 & 0.706 & 0.642 & 0.646 & 0.671 & 0.429 \\
		$D_3$ & \textbf{0.848} & 0.521 & 0.778 & 0.677 & 0.728 & 0.590 & 0.668 & 0.438 \\
		$D_4$ & 0.480 & 0.450 & \textbf{0.592} & 0.543 & 0.580 & 0.136 & 0.185 & 0.000 \\
		$D_5$ & \textbf{0.773} & 0.477 & 0.755 & 0.636 & 0.715 & 0.418 & 0.747 & 0.686 \\
		$D_6$ & 0.895 & 0.882 & \textbf{0.906} & 0.893 & 0.942 & 0.873 & 0.726 & 0.847 \\
		$D_7$ & 0.495 & 0.393 & 0.484 & 0.465 & \textbf{0.515} & 0.479 & 0.425 & 0.463 \\
		$D_8$ & \textbf{0.641} & 0.392 & 0.592 & 0.466 & 0.473 & 0.408 & 0.364 & 0.489 \\
		$D_9$ & \textbf{0.989} & 0.686 & 0.985 & 0.758 & 0.895 & 0.828 & 0.820 & 0.795 \\
		$D_{10}$ & \textbf{0.935} & 0.546 & 0.818 & 0.604 & 0.676 & 0.611 & 0.632 & 0.564 \\
		$D_{11}$ & 0.837 & 0.519 & 0.801 & 0.576 & 0.723 & 0.588 & \textbf{0.870} & 0.304 \\
		$D_{12}$ & \textbf{0.746} & 0.538 & 0.710 & 0.560 & 0.615 & 0.035 & 0.111 & 0.077 \\
		$D_{13}$ & \textbf{0.900} & 0.513 & 0.567 & 0.538 & 0.600 & 0.698 & 0.687 & 0.275 \\
		$D_{14}$ & 0.747 & 0.406 & \textbf{0.763} & 0.546 & 0.497 &.635 & 0.717 & 0.416 \\
		$D_{15}$ & \textbf{0.802} & 0.694 & 0.775 & 0.694 & 0.797 & 0.650 & 0.654 & 0.718 \\
		$D_{16}$ & \textbf{0.238} & 0.067 & 0.016 & 0.049 & 0.040 & 0.127 & 0.032 & 0.004 \\
		$D_{17}$ & \textbf{0.903} & 0.666 & 0.810 & 0.689 & 0.850 & 0.703 & 0.598 & 0.417 \\
		$D_{18}$ & \textbf{0.796} & 0.672 & 0.706 & 0.628 & 0.764 & 0.647 & 0.349 & 0.283 \\
		$D_{19}$ & \textbf{0.776} & 0.654 & 0.534 & 0.604 & 0.753 & 0.336 & 0.450 & 0.463 \\
		$D_{20}$ & \textbf{0.829} & 0.000 & 0.633 & 0.362 & 0.335 & 0.329 & 0.416 & 0.099 \\
		$D_{21}$ & \textbf{0.823} & 0.638 & 0.774 & 0.666 & 0.819 & 0.815 & 0.656 & 0.692 \\
		$D_{22}$ & \textbf{0.626} & 0.314 & 0.555 & 0.385 & 0.484 & 0.569 & 0.528 & 0.414 \\
		$D_{23}$ & \textbf{0.263} & 0.080 & 0.169 & 0.091 & 0.130 & 0.104 & 0.172 & 0.085 \\
		\hline
		Average & \textbf{0.734} & 0.496 & 0.663 & 0.560 & 0.624 & 0.508 & 0.522 & 0.418\\
		\hline
	\end{tabular}
	\label{kap}
\end{table}

\begin{figure}[h!]
	\centering
	\includegraphics[width=1\textwidth]{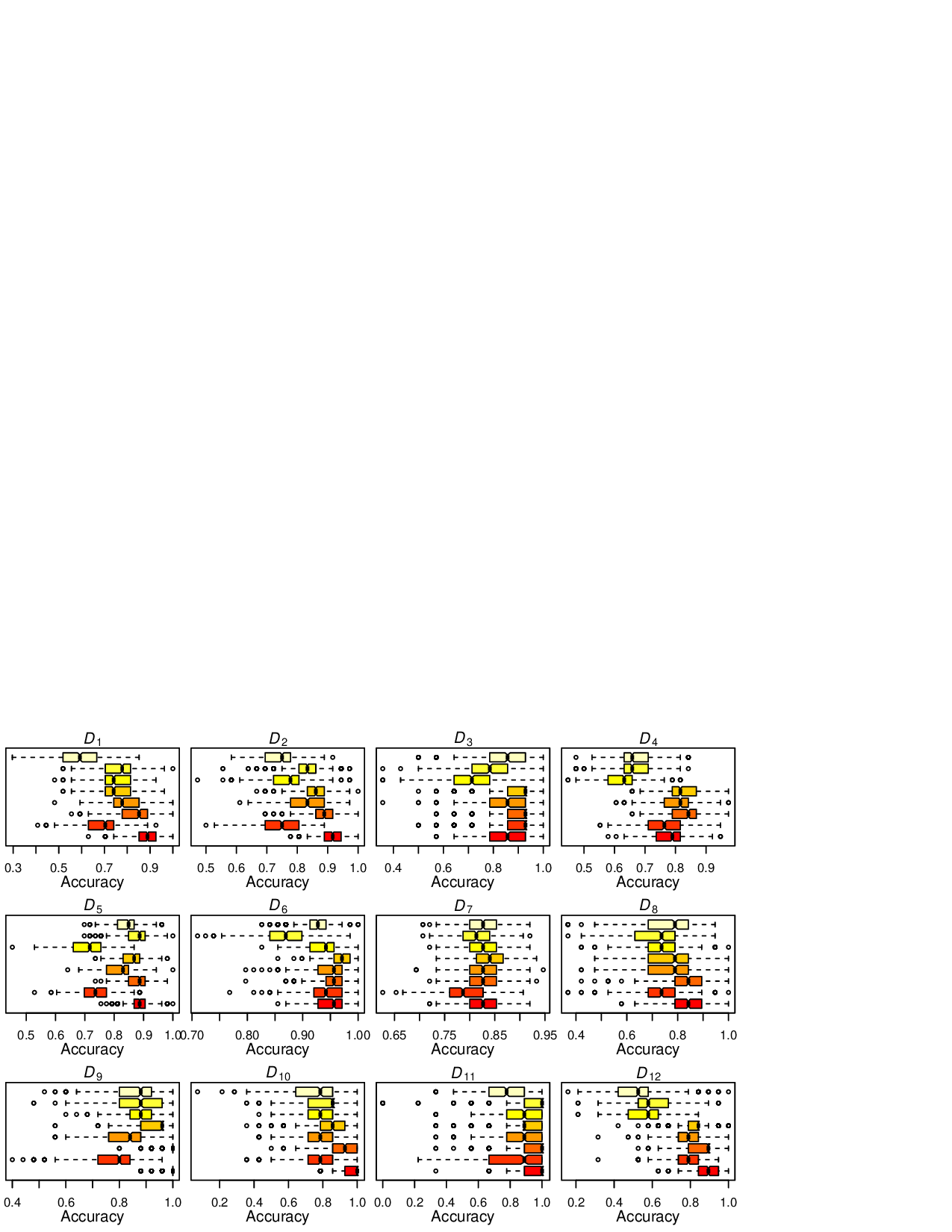}
	\includegraphics[width=1\textwidth]{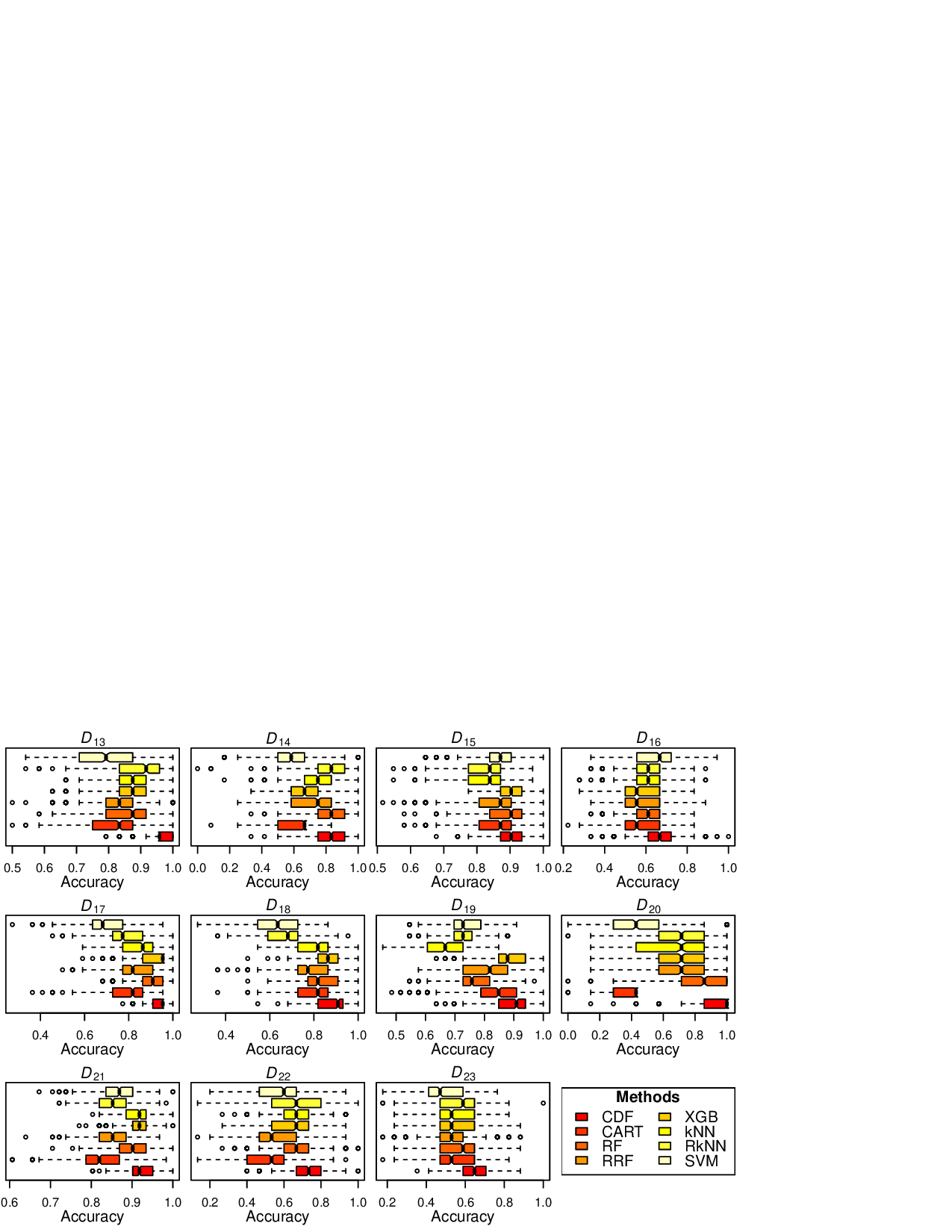}
	\caption{Boxplot of classification accuracy for the proposed CDF and state-of-the-art methods across multiple datasets. Accuracy is averaged over 500 repeated training-testing splits.}
	\label{accbox}
\end{figure}
\begin{figure}[h!]
	\centering
	\includegraphics[width=1\textwidth]{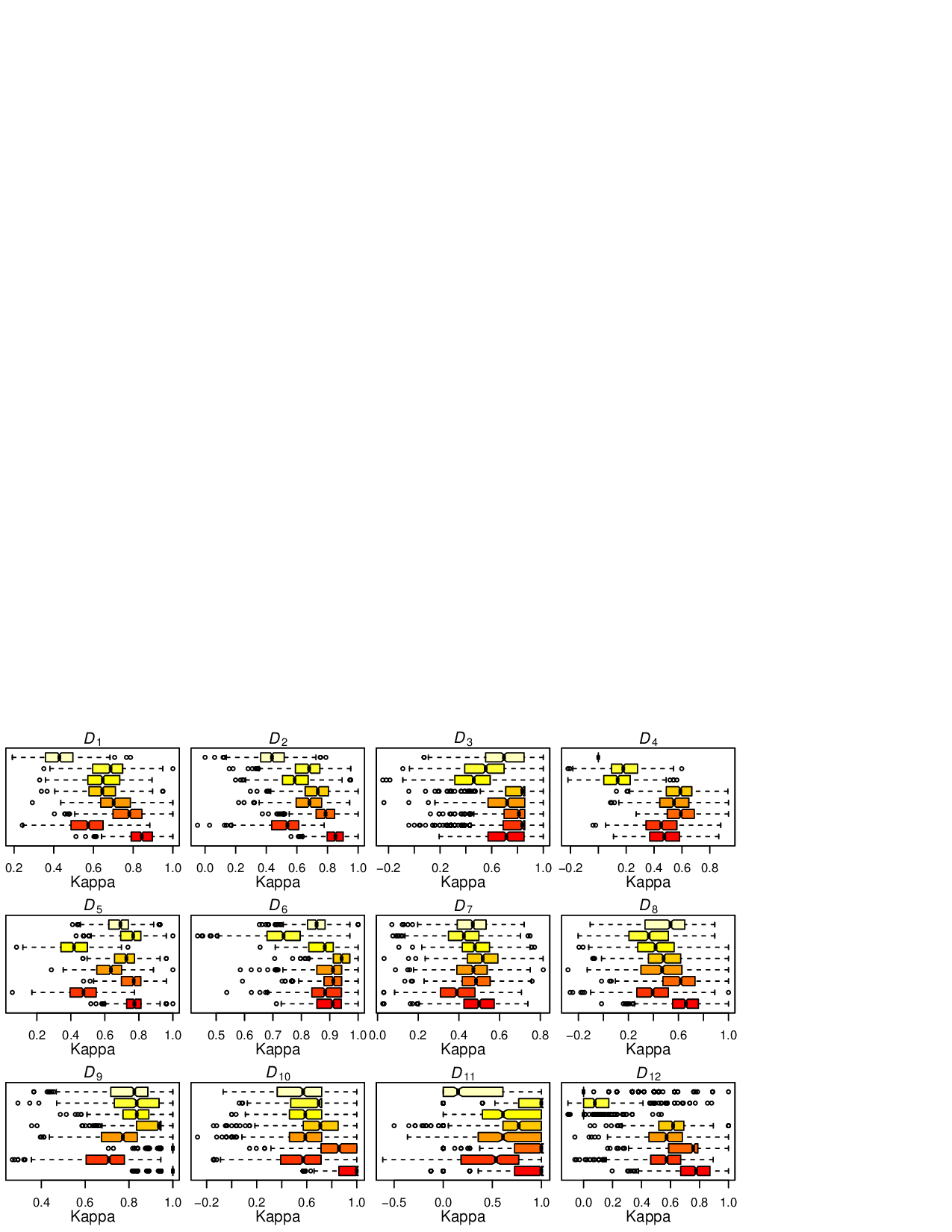}
	\includegraphics[width=1\textwidth]{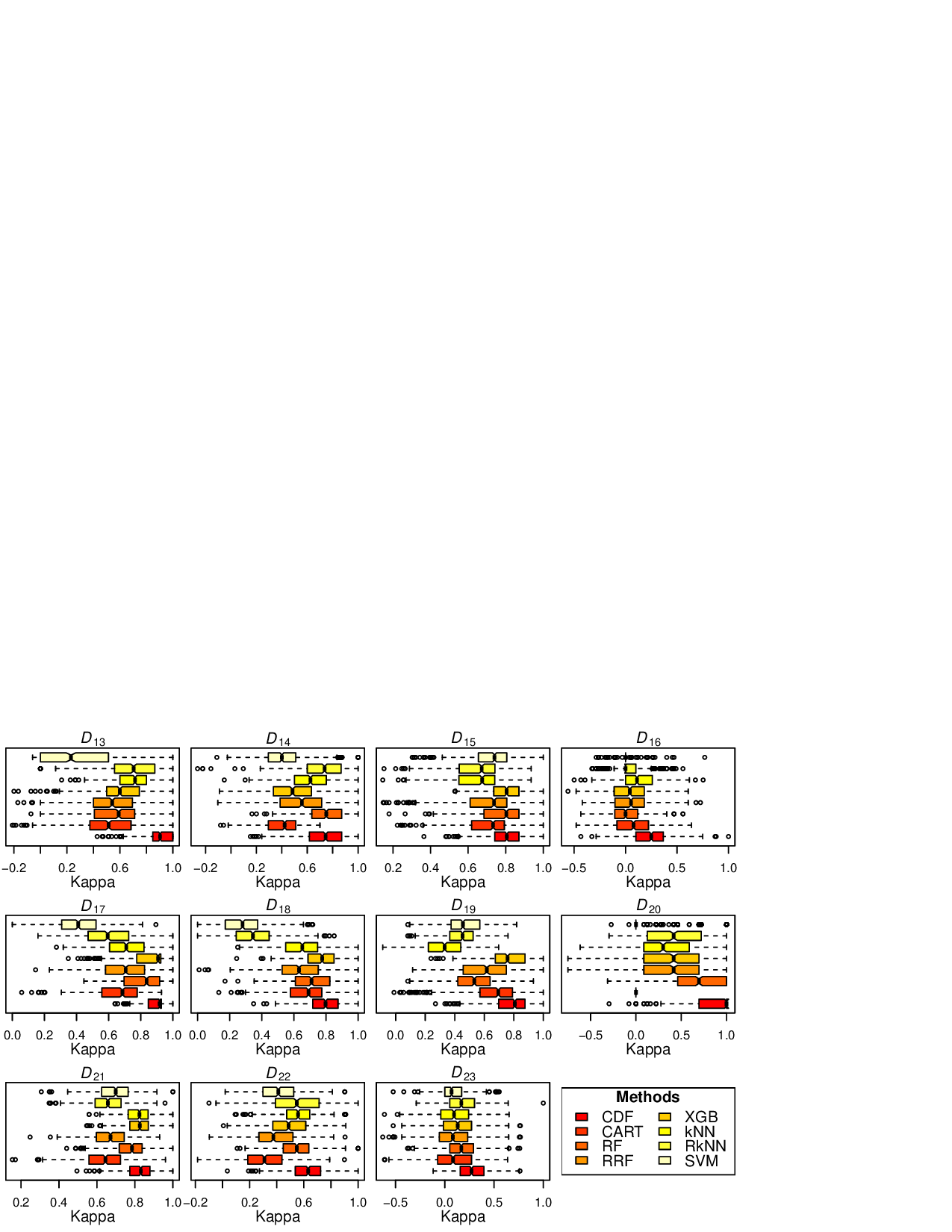}
	\caption{Boxplot of Cohen’s kappa for the proposed CDF and state-of-the-art methods across multiple datasets. Kappa values are averaged over 500 repeated training-testing splits}
	\label{kapbox}
\end{figure}

\subsection{Interpreting a CDT}
The centroid decision tree (CDT) in Figure \ref{centroid_tree} is constructed on $D_8$ (Colon) dataset. It selects the most discriminative features using the class separability score (CSS) and uses a centroid based splitting criterion to maximize class distinction at each node. Therefore, the combination of CSS for feature selection and centroid based partitioning gets rid of all other features that are not relevant for decision-making, thus improving classification accuracy.

At the root node, the features $X_{377}, X_{493}$ and $X_{1635}$ are selected based on their high CSS values, indicating their strong discriminatory power. To statistically validate their importance, the Wilcoxon rank-sum test is applied, yielding p-values of $1.99 \times 10^{-8}$ for $X_{377}$, $5.44 \times 10^{-8}$ for $X_{493}$ and $1.69 \times 10^{-7}$ for $X_{1635}$. These results confirm that the selected features significantly differentiate between the classes. Samples at the root node are split based on their proximity to class centroids. The left class centroid is $(0.96, 0.97, 0.91)$, while the right class centroid is $(-0.55, -0.57, -0.49)$. This centroid based decision mechanism ensures an optimal separation between the two classes, guiding samples toward their respective child nodes.

At the left child node, the model selects features $X_{1398}, X_{1570}$ and $X_{1924}$ to further refine class separation. The centroids for this node indicate distinct feature distributions, with left class centroid at $(0.56, -0.69, 0.71)$ and right class centroid at $(-0.91, 0.28, -0.91)$. These centroids are also constructed on the most significant features like in the root node, play a crucial role in distinguishing between the class distributions.

Similarly, at the right child node, the selected features $X_{1346}, X_{1466}$ and $X_{1772}$ further enhance the classification process. The left class centroid at this node is $(0.50, 0.46, 0.59)$, while the right class centroid is $(-2.24, -1.95, -0.94)$. These centroids highlight a strong feature separation, ensuring an effective decision boundary between the two classes.

At the final stage, the leaf nodes represent the ultimate classification decision using majority voting approach. Each sample follows a structured sequence of centroid based splits, progressively refining its class assignment. This approach guarantees high classification precision by leveraging the most informative features at each decision point.

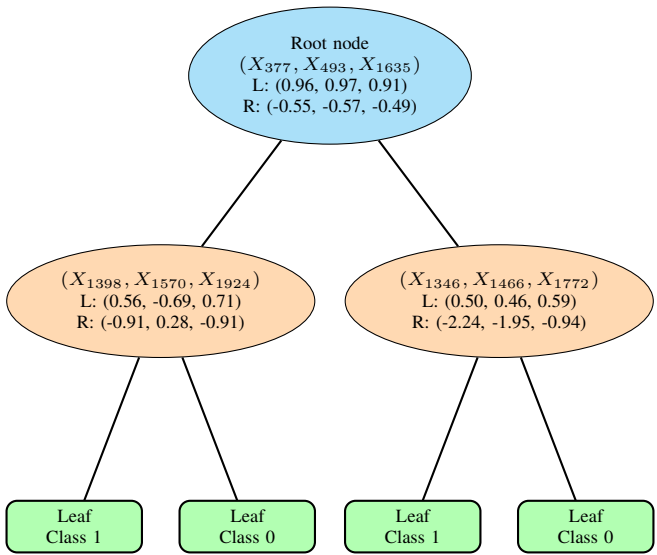
\begin{figure}[h!]
	\centering
	\begin{tikzpicture}[
		level distance=3cm, 
		level 1/.style={sibling distance=4.5cm}, 
		level 2/.style={sibling distance=2.3cm}, 
		edge from parent/.style={draw, thick},
		every node/.style={align=center, font=\scriptsize}, 
		root/.style={fill=cyan!30, draw=black, ellipse, minimum width=3cm, minimum height=1.5cm}, 
		internal/.style={fill=orange!30, draw=black, ellipse, minimum width=2.8cm, minimum height=1.5cm}, 
		leaf/.style={fill=green!30, draw=black, rounded corners, minimum width=1.8cm} 
		]
		
		\node[root] {Root node \\ 
			$(X_{377}, X_{493}, X_{1635})$ \\ 
			L: (0.96, 0.97, 0.91) \\ 
			R: (-0.55, -0.57, -0.49)}
		child { node[internal] {
				$(X_{1398}, X_{1570}, X_{1924})$ \\ 
				L: (0.56, -0.69, 0.71) \\ 
				R: (-0.91, 0.28, -0.91)}
			child { node[leaf] {Leaf \\ Class 1} }
			child { node[leaf] {Leaf \\ Class 0} }
		}
		child { node[internal] {
				$(X_{1346}, X_{1466}, X_{1772})$ \\ 
				L: (0.50, 0.46, 0.59) \\ 
				R: (-2.24, -1.95, -0.94)}
			child { node[leaf] {Leaf \\ Class 1} }
			child { node[leaf] {Leaf \\ Class 0} }
		};
	\end{tikzpicture}
	\caption{Structure of the CDT, illustrating selected features, centroids and splitting. At the first node, it selects top features (i.e., $X_{377}$, $X_{493}$) via CSS and splits data using class centroids (i.e., $(0.96, 0.97, 0.91)$ vs. $(-0.55, -0.57, -0.49)$). At each subsequent node, centroid-based partitioning refines separation, with Wilcoxon tests ($p < 0.001$) confirming feature significance. Final classification uses majority voting in the leaf nodes.}
	\label{centroid_tree}
\end{figure}

\subsection{Hyper-parameter Assessment}
The plot in Figure \ref{accplot} illustrates the relationship between the number of CDTs in the CDF and classification performance for $D_{13}$ (DLBCL) dataset. Initially, accuracy increases sharply as more trees are added, demonstrating the benefit of ensemble learning. Beyond approximately 300 trees, the accuracy stabilizes around 0.97, indicating diminishing returns, where adding more trees provides little to no improvement. Minor fluctuations are observed, but they remain within a narrow range. This pattern highlights the existence of an optimal tree count that balances accuracy and computational efficiency.

\begin{figure}[h!]
	\centering
	\includegraphics[width=1\textwidth]{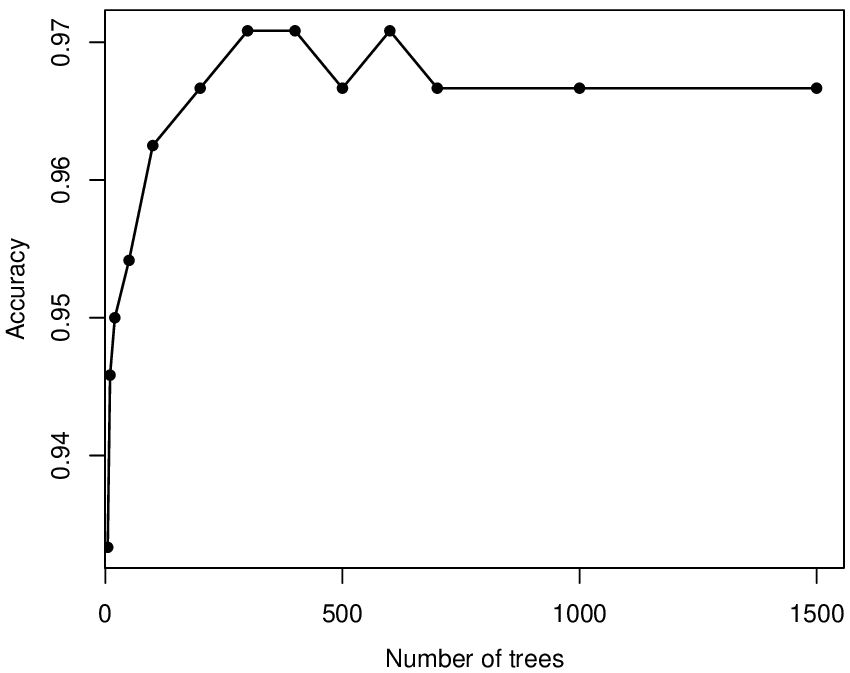}
	\caption{Classification accuracy in the CDF improves as more trees are added, stabilizing beyond 300 due to diminishing returns.}
	\label{accplot}
\end{figure}

The plot in Figure \ref{mtryplot} shows how classification accuracy changes with the percentage of randomly selected features used at each node. Initially, accuracy increases as more features are included, suggesting that a small but diverse subset enhances decision making. Beyond approximately 40\% of the total features per node, accuracy stabilizes near 0.98, indicating that additional features provide minimal improvement. This trend highlights the effectiveness of controlled feature selection at each node, ensuring optimal performance while maintaining computational efficiency.

\begin{figure}[h!]
	\centering
	\includegraphics[width=1\textwidth]{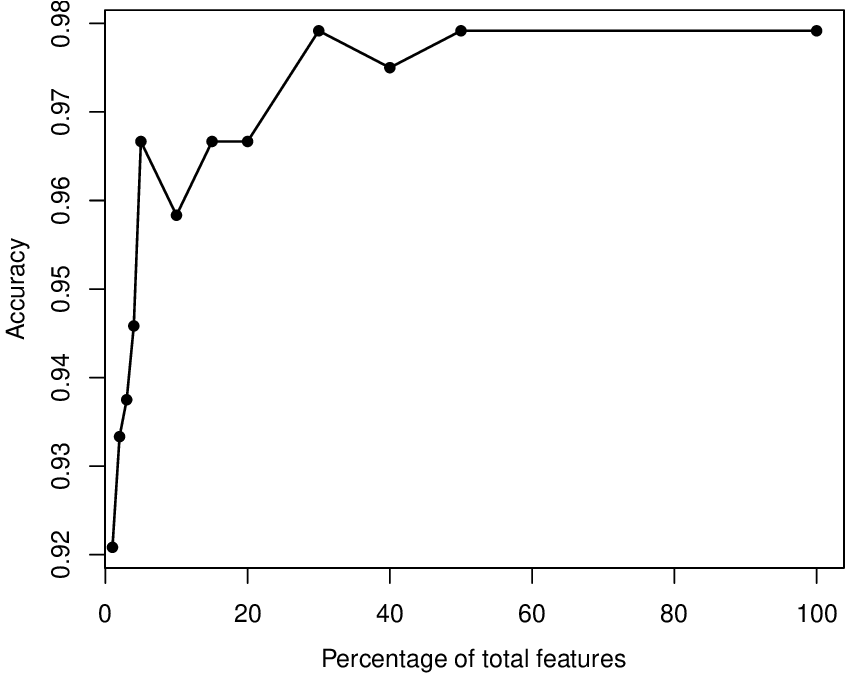}
	\caption{Impact of the percentage of randomly selected features per node on classification accuracy in the CDF.}
	\label{mtryplot}
\end{figure}

The plot in Figure \ref{cssplot} examines the relationship between the number of features selected using the class separability score (CSS) and classification accuracy for $D_{13}$ dataset. Accuracy initially rises as more features are included, reaching an early peak. However, beyond a certain point, accuracy stabilizes and eventually declines, suggesting that redundant or noisy features can negatively impact model performance. This highlights the importance of selecting an optimal feature subset to enhance node splitting and centroid based classification.

For the sake of simplicity, we have used fixed values of the above parameters. However, the above findings reveal that the CDF performance can further be improved by fine tuning of the hyper-parameters. 

\begin{figure}[h!]
	\centering
	\includegraphics[width=1\textwidth]{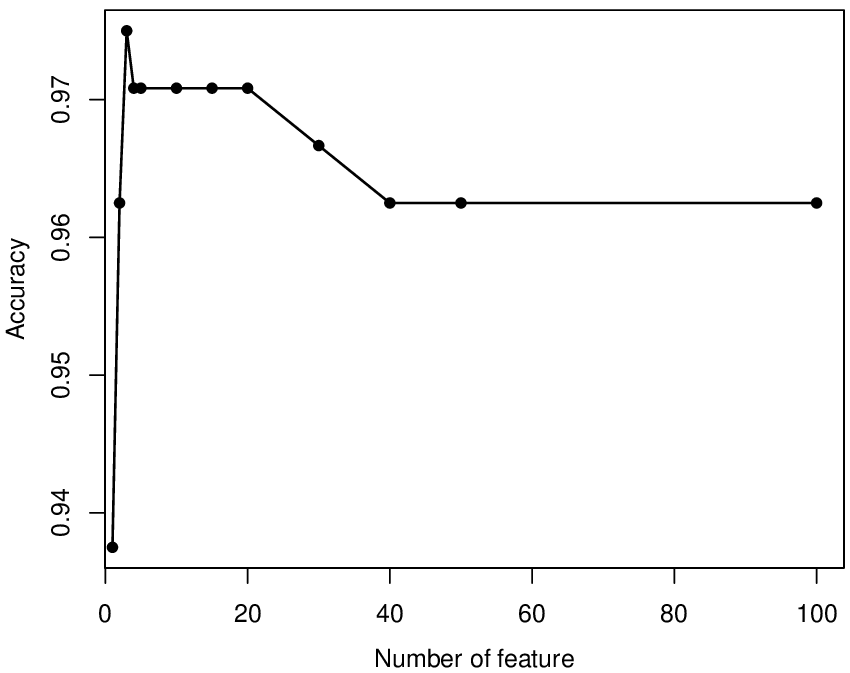}
	\caption{Effect of CSS selected features on accuracy in the CDF.}
	\label{cssplot}
\end{figure}

\section{Conclusion}
\label{sec:conclusion}
The centroid decision forest (CDF) introduced in this paper is an ensemble classifier designed for high-dimensional classification, particularly in small-sample settings. By redefining the splitting strategy in decision trees, CDF uses a class separability score (CSS) to select highly discriminative features and applies centroid-based partitioning of the feature space. This design allows the model to capture complex class structures while maintaining interpretability and scalability. The CSS helps to downweight irrelevant and noisy variables, and the combination of bootstrap sampling and feature sub-sampling yields a robust ensemble. Experimental results on multiple real high-dimensional datasets show that CDF consistently outperforms several state-of-the-art classifiers in terms of accuracy and Cohen’s kappa, highlighting its effectiveness for diverse and complex pattern recognition problems.

The proposed method is particularly useful for applications where the number of features greatly exceeds the number of observations, such as gene-expression and other omics data, high-resolution imaging, and certain financial problems. In these scenarios, conventional classifiers often struggle with noisy and redundant features, whereas CDF provides a practical and interpretable alternative. An implementation in the R environment further facilitates reproducible research and makes it straightforward for researchers and practitioners to apply CDF in their own pattern recognition tasks.

Despite these strengths, the classifier has some limitations. Its reliance on Euclidean distance makes it less suitable for categorical or mixed-type data without appropriate preprocessing or alternative distance measures. Moreover, the current CSS-based feature selection is only one possible choice and may be refined or adapted to different data characteristics. Future work will therefore focus on extending CDF to more flexible distance measures, exploring alternative or hybrid feature selection strategies, improving scalability for very large datasets through parallel and distributed implementations, and carrying out further empirical and theoretical studies to better understand its behaviour in a wide range of real-world pattern recognition applications.

\section*{Funding}
This research was funded by the United Arab Emirates University under grant number 12B086.

\bibliography{mybibfile}
\end{document}